\documentclass[11pt, a4paper, onecolumn, confidential, copyright, goog]{google}

\usepackage[authoryear, sort&compress, round]{natbib}
\usepackage{microtype}
\usepackage{graphicx}
\usepackage{subfigure}
\usepackage{booktabs}
\usepackage{pifont} 
\usepackage{mdframed}
\usepackage{xcolor}
\usepackage{tabularray}
\usepackage{fontawesome5}
\usepackage[normalem]{ulem}
\usepackage[utf8]{inputenc}
\usepackage[T1]{fontenc}
\usepackage{hyperref}
\usepackage{url}
\usepackage{booktabs}
\usepackage{amsfonts}
\usepackage{nicefrac}
\usepackage{microtype}
\usepackage{xcolor}
\usepackage{amsmath}
\usepackage{amssymb}
\usepackage{amsthm}
\usepackage{graphicx}
\usepackage{enumitem}
\usepackage{mathtools}
\usepackage{algorithm}
\usepackage{algpseudocode}
\usepackage[most]{tcolorbox}
\usepackage{subcaption}
\usepackage{wrapfig}
\usepackage[utf8]{inputenc}
\usepackage{amsmath}
\usepackage{amssymb}
\usepackage{amsthm}
\usepackage{mathtools}
\usepackage{geometry}
\usepackage{algorithm}
\usepackage{algpseudocode}
\usepackage{xcolor}
\usepackage{caption}
\theoremstyle{definition}

\newtheorem{proposition}{Proposition}[section]

\newtheorem{remark}{Remark}[section]
\usepackage{hyperref}
\usepackage{xspace}
\usepackage{booktabs}
\usepackage{multirow}
\usepackage{tabularray}
\usepackage{adjustbox}
\usepackage{framed}
\usepackage{dsfont}
\usepackage{wrapfig}

\newcommand{\baseg}{G_{\mathrm{base}}}
\newcommand{\reportbank}{\mathcal{B}}
\newcommand{\broadmem}{\mathcal{M}}
\newcommand{\localmem}{\mathcal{L}}
\newcommand{\allmem}{\mathcal{Q}}

\newcommand{\conf}{\mathrm{Conf}}
\newcommand{\thref}{\tau_{\mathrm{refuse}}}
\newcommand{\thallow}{\tau_{\mathrm{allow}}}
\newcommand{\gfun}{g}

\uselogo{} 

\title{LiSA: Lifelong Safety Adaptation via Conservative Policy Induction}

\correspondingauthor{Minbeom Kim: minbeomkim@snu.ac.kr and Long T. Le: longtle@google.com}

\reportnumber{} % Leave blank if n/a

\renewcommand{\today}{}

\author[1 2 *]{Minbeom Kim}
\author[1]{Lesly Miculicich}
\author[1]{Bhavana Dalvi Mishra}
\author[1]{Mihir Parmar}
\author[3]{Phillip Wallis}
\author[3]{Bharath Chandrasekhar}
\author[2]{Kyomin Jung} 
\author[1]{Tomas Pfister}
\author[1]{Long T. Le}

\affil[1]{Google Cloud AI Research}
\affil[2]{Seoul National University}
\affil[3]{Google}

\begin{abstract}

As AI agents move from chat interfaces to autonomous systems that read private data, call tools, and execute multi-step workflows, guardrails become an important line of defense against concrete deployment harms. In these settings, guardrail failures are no longer merely answer-quality errors: they can leak secrets, authorize unsafe actions, or block legitimate work. The hardest failures are often contextual: whether an action is acceptable depends on local privacy norms, organizational policies, and user expectations that resist pre-deployment specification. This creates a practical gap: guardrails must adapt to their own operating environments, yet deployment feedback is typically limited to sparse, noisy user-reported failures, and repeated fine-tuning is often impractical.
To address this gap, we propose \textsc{LiSA} (\textbf{Li}felong \textbf{S}afety \textbf{A}daptation), a conservative policy induction framework that improves a fixed base guardrail through structured memory. \textsc{LiSA} converts occasional failures into reusable policy abstractions so that sparse reports can generalize beyond individual cases, adds conflict-aware local rules to prevent overgeneralization in mixed-label contexts, and applies evidence-aware confidence gating via a posterior lower bound, so that memory reuse scales with accumulated evidence rather than empirical accuracy alone. Across PrivacyLens+, ConFaide+, and AgentHarm, \textsc{LiSA} consistently outperforms strong memory-based baselines under sparse feedback, remains robust under noisy user feedback even at 20\% label-flip rates, and pushes the latency--performance frontier beyond backbone model scaling. Ultimately, \textsc{LiSA} offers a practical path to secure AI agents against the unpredictable long tail of real-world edge risks.

\end{abstract}

\begin{document}
\maketitle

\section{Introduction}

Large language models (LLMs) increasingly power AI agents that do more than answer questions: they access private data~\citep{abaev2026agentguardian}, call privileged tools~\citep{workarena}, and execute multi-step workflows~\citep{taubench}. As such systems move into deployment, the cost of error escalates from low-stakes generation mistakes to concrete harms: incorrect allow decisions can leak private information or authorize unsafe actions, while incorrect refusals can block legitimate work. To mitigate these risks, deployed AI systems increasingly rely on safety guardrails as a critical last line of defense.

A growing line of work~\citep{agrail, llamaguard, shieldgemma, causalarmor, piguard} has introduced different guardrails, including refusal-oriented prompting, safety classifiers, rule-based validators, and runtime monitors. However, these methods share a fundamental limitation: they rely on a static, general-purpose definition of harm specified before deployment. In practice, safety and privacy boundaries are rarely universal. Acceptable behavior is shaped by local organizational rules, shifting user expectations, and task-specific risk tolerances that are difficult to fully enumerate in advance~\citep{privacyreasoning, contextual1, contextual2}. Consequently, a fixed guardrail is often mismatched to its unique deployment environment—leaving it too permissive against novel risks while remaining too restrictive for legitimate, context-specific actions.

To bridge this gap, we formulate the problem of \textbf{deployment-time guardrail adaptation}: a deployed guardrail should improve over time from the failures that arise in its own operating context. This setting imposes three constraints that distinguish it from standard supervised updating. First, adaptation must occur under \emph{sparse supervision}~\citep{sparse}: users rarely provide dense, curated labels, yielding only occasional corrections. Second, feedback can be \emph{noisy}~\citep{noisy}: users may disagree, misattribute failures, or report preferences as safety concerns. Third, adaptation must remain \emph{conservative}~\citep{conservatism}: overgeneralizing from a handful of local mistakes can degrade helpfulness through overly broad refusals, or compromise safety by over-trusting weakly supported permissive rules.

To address these challenges, we propose \textbf{\textsc{LiSA}} (\textbf{Li}felong \textbf{S}afety \textbf{A}daptation), a conservative policy induction framework organized as an online--offline loop. Rather than repeatedly fine-tuning the base guardrail, \textsc{LiSA} improves it through structured policy memory and evidence-aware reuse as in Figure~\ref{fig:overview}. Broad policy abstractions turn sparse failure reports into reusable guidance; conflict-aware local policies preserve fine-grained resolution near mixed-label regions; and confidence-gated reuse surfaces broad memory only when accumulated evidence supports it, preventing weakly supported abstractions from influencing inference too early. Together, these components allow a fixed guardrail to adapt to its operating environment while remaining stable under sparse, noisy feedback.

Empirically, we evaluate \textsc{LiSA} on PrivacyLens+~\citep{privacylens}, ConFaide+~\citep{confaide}, and AgentHarm~\citep{agentharm} under simulated deployment streams with sparse failure reports. Across datasets and two lightweight online guardrails, \textsc{LiSA} consistently outperforms the fixed base guardrail and strong memory-based baselines. Ablations reveal that while broad abstraction aids adaptation—much like existing memory baselines—local policies drive the most substantial performance improvements. Furthermore, confidence gating stabilizes these gains and renders the system highly robust even when reported labels are noisy. Finally, our latency analysis shows that structured memory offers a more efficient path than simply scaling the guardrail: LISA attached to the lightweight model pushes the latency--performance frontier beyond larger un-adapted backbones.

\begin{figure}[t]
    \centering
    \includegraphics[trim=0 30 0 85, clip, width=\linewidth]{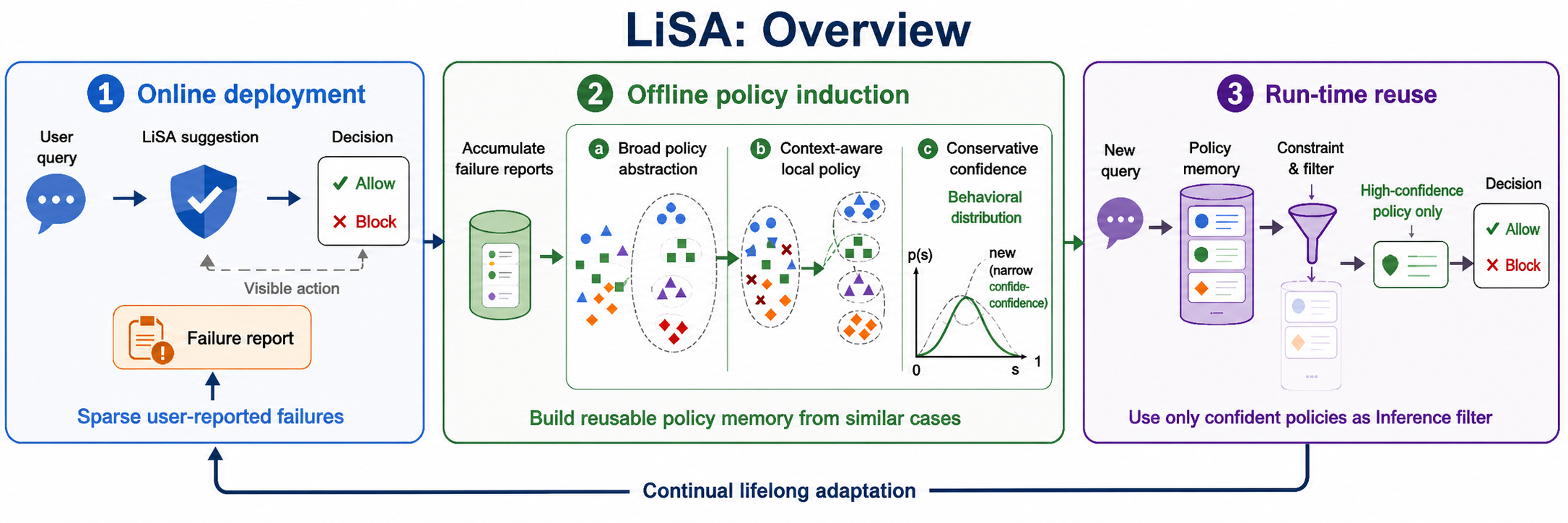}
    \caption{\textbf{Overview of \textsc{LiSA} guardrail}. \textit{1) Online (left)}: a fixed base guardrail decides on each query and logs user-reported mistakes. \textit{2) Offline (center)}: sparse reports are abstracted into broad policy items, while mixed-label neighborhoods are rendered into conflict-aware local refinement rules. Broad policy items are scored by a Beta posterior over their support/contradiction counts. \textit{3) Run-time reuse (right)}: semantically matched local rules are surfaced as narrow refinement cues, and broad policies are surfaced only when their posterior lower bound clears a label-specific threshold.}
    \label{fig:overview}
\end{figure}
\newpage
\paragraph{Contributions.}
Our contributions are threefold:
\begin{itemize}[leftmargin=1.4em]
    \item We formulate the problem of \textit{lifelong guardrail adaptation}, where a fixed base guardrail improves from sparse, potentially noisy user-reported corrections without repeated fine-tuning.

    \item We propose \textbf{\textsc{LiSA}}, a structured policy memory framework driven by three core mechanisms: broad policy abstraction for sparse reuse, conflict-aware local refinement for mixed-label regions, and conservative confidence-gated reuse that surfaces memory only when accumulated evidence warrants it.

    \item Across PrivacyLens+, ConFaide+, and AgentHarm, we demonstrate that \textsc{LiSA}: (i) consistently outperforms strong memory-based baselines under sparse feedback; (ii) remains robust to noisy reports, with conservative confidence-gated reuse identified as a key stabilizing factor; (iii) improves boundary-sensitive decisions through conflict-aware local refinement; and (iv) pushes the latency--performance frontier beyond base-model scaling, showing that memory-based adaptation can be a more efficient route to improving deployed guardrails than using larger static backbones.
\end{itemize}

\section{Related works}

\subsection{Guardrails for AI agent safety}

A broad family of guardrailing mechanisms has emerged as LLMs gain access to private data and privileged tools, including general-purpose safety classifiers~\citep{llamaguard, shieldgemma, guardreasoner, wildguard}, implicit toxicity detectors~\citep{lifetox}, refusal-oriented safeguards~\citep{refuse, refuse2}, monitors over agent reasoning or trajectories~\citep{cotmonitor}, and defenses against indirect prompt manipulation~\citep{causalarmor, piguard, guardagent, leng2025static}. While these methods address complementary risk surfaces, they are typically specified before deployment and remain largely fixed during use. When out-of-distribution failures emerge in real environments, a natural response is to collect more examples and update the guardrail through retraining~\citep{oodhandling}. This path is mismatched to the regime we study, where failures are sparse, feedback arrives as occasional user reports, and repeated training is often impractical~\citep{continualfail}. We therefore ask whether a fixed base guardrail can improve directly from deployment-time experience, without retraining, by organizing sparse feedback into lightweight reusable structure.

\subsection{Memory and policy abstraction for adaptive agents}

A growing body of work equips LLM agents with memory~\citep{hu2025memory} so that they can accumulate experience, either by retrieving past trajectories as exemplars~\citep{synapse} or by inducing reusable natural-language policies, codes~\citep{lee2026program}, or reflections from prior outcomes~\citep{reflexion, tan2025prospect, reasoningbank, reflectcap}. These methods are typically developed for reasoning and planning settings~\citep{hu2025memory, choi2026policybank}, where supervision comes from task success and a poorly surfaced memory item degrades answer quality rather than causing a direct safety failure; broad abstraction and relatively permissive retrieval are reasonable defaults in that regime.

Guardrailing departs from this regime in several ways that matter for memory design. Labels are contextual~\citep{privacyreasoning} and often user- or organization-specific~\citep{abaev2026agentguardian, orgaccess} rather than determined by task success, so induced policies inherit feedback noise directly. Moreover, a single mis-surfaced memory item can trigger a privacy leak or an unsafe allow decision, making weakly supported reuse much riskier than in task-oriented memory systems. These properties make broad abstraction useful but also make naive retrieval substantially more brittle in the guardrail setting.

Within safety guardrails domain, adaptive memory remains relatively underexplored. 
AGrail~\citep{agrail} maintains an updatable safety checklist, but checklist-style adaptation has limited resolution in mixed-label regions and does not explicitly calibrate memory reuse by confidence. 
Recent works study personalized guardrails~\citep{personalguard, personalsafety} by conditioning safety reasoning on user profiles, but focus on profile-conditioned decision making rather than learning from sparse case-level corrections. 
\textsc{LiSA} targets this complementary deployment-time adaptation setting by jointly addressing two failure modes of prior adaptive memory: it adds conflict-aware local refinement so that mixed-label neighborhoods are not collapsed into a single broad rule, and it gates broad-policy reuse with a Beta-posterior lower bound rather than relying on retrieval similarity or empirical accuracy alone.

\section{\textsc{LiSA} guardrails}

\subsection{Problem setup and deployment loop}

We study deployment as an alternating \emph{online--offline} loop. Online, the guardrail receives a stream of deployment inputs $x_t \in \mathcal{X}$,
and outputs a binary decision $\hat y_t \in \{0,1\}$,
where $0$ denotes \textsc{allow} and $1$ denotes \textsc{refuse}. A fixed base guardrail
\[
\baseg : \mathcal{X} \to \{0,1\}
\]
is available throughout deployment.
As the system is used, it accumulates sparse user-reported corrections
$
\reportbank_t = \{(x_i,\tilde y_i)\}_{i=1}^{n_t}.
$
These reports arrive irregularly and may be noisy. Rather than repeatedly fine-tuning the guardrail, we periodically refresh memory from the accumulated reports and redeploy the updated memory in the next online phase. This yields a lightweight form of \emph{lifelong safety adaptation}: the deployed guardrail improves over time while the base guardrail remains fixed.

Our method combines three components: broad policy memory for reusable coverage (\S\ref{subsec:policy}), conflict-aware local policies for ambiguous regions (\S\ref{subsec:local}), and confidence-gated reuse (\S\ref{subsec:gating}) so that broad abstractions are surfaced only when sufficiently supported, while local rules are used as narrow refinement cues for semantically close mixed-label cases.

\subsection{Structured policy memory}
\label{subsec:policy}

The central unit of adaptation is a \emph{policy item}. At each offline refresh, \textsc{LiSA} converts newly reported failures into broad policy candidates and merges semantically overlapping candidates across refreshes. A broad policy item is represented as
\[
m = (r_m,\ell_m,\nu_m),
\]
where $r_m$ is a natural-language policy statement, $\ell_m \in \{0,1\}$ is the label it recommends, and $\nu_m$ stores metadata such as provenance, examples, and runtime statistics. For instance, a broad item induced during deployment may read 
``\textit{Sharing general or public information is appropriate even by 
professionals in confidential roles},'' with $\ell_m=\textsc{allow}$ and $\nu_m$ 
aggregating support and contradiction counts across the reports that induced 
it (Appendix~\ref{app:policy-examples}, Example~1). These items are designed for sparse reuse: rather than storing each failure only as an isolated case, \textsc{LiSA} stores a compact abstraction that can guide future decisions in related contexts. The resulting broad memory is
\[
\broadmem_t = \{m_j\}_{j=1}^{M_t}.
\]

\paragraph{Why broad abstraction alone misses local boundaries.}
Broad memory improves reuse under sparse feedback, but it can also become too coarse. Nearby contexts with different labels may be covered by the same broad policy, causing the memory to overgeneralize across a local decision boundary. Since sparse feedback does not support refining every broad policy, \textsc{LiSA} adds conflict-aware local policies only in mixed-label regions where broad reuse is most likely to fail. Section~\ref{sec:formal-support} formalizes this motivation, and Appendix~\ref{app:offline-refresh} gives the operational refresh procedure.

\subsection{Conflict-aware local policies}
\label{subsec:local}
When the report neighborhood associated with a broad pattern contains both labels with non-trivial support, we treat that region as evidence that broad reuse is overgeneralizing across a local boundary. For instance, coworker-to-coworker sharing may be appropriate for routine coordination but inappropriate when it involves client insurance information without clear need-to-know authorization~\citep{nissenbaum2004privacy}.
We then induce one or more narrower policy items
\[
e = (r_e,\ell_e,\nu_e),
\]
and store them in local memory
\[
\localmem_t = \{e_k\}_{k=1}^{L_t}.
\]
As an instance, a region centered on ``a friend attended a public lecture'' splits between \textsc{allow} and \textsc{refuse} depending on whether the lecture is a public talk or a fringe event; \textsc{LiSA} renders complementary label-specific cues for this region rather than forcing a single broad rule across the boundary (Appendix~\ref{app:policy-examples}, Example~3). Broad and local policies are stored as natural-language memory entries rather than executable rules, but they play distinct roles and are governed by different reuse rules. Broad memory provides reusable coverage under sparse feedback and is therefore subject to confidence gating (Section~\ref{subsec:gating}), so that weakly supported abstractions do not influence inference too early. Local memory, by contrast, is induced \emph{only} in mixed-label regions where nearby cases split labels; its purpose is to expose a contradictory boundary cue to the inference model rather than to assert a globally reusable rule. Because a local rule is, by construction, anchored to a conflict-heavy semantic neighborhood, applying the same broad-policy gate to it would suppress exactly the boundary signal it is meant to surface. We therefore do \emph{not} gate local rules at inference time: any retrieved local rule is surfaced together with its support and contradiction counts as evidence for the inference model. Appendix~\ref{app:offline-refresh} specifies the deterministic procedure that detects mixed-label regions and renders label-specific local rules.

\subsection{Confidence-gated online guardrailing}
\label{subsec:gating}
Let
\[
\allmem_t = \broadmem_t \cup \localmem_t
\]
denote the full memory at deployment time. For each broad item $m \in \broadmem_t$, we maintain support and contradiction counts $(s_m, c_m)$, initialized from the inducing reports and updated when surfaced broad items later receive additional feedback. Local items also store support and contradiction counts, but these counts are serialized as local evidence rather than used for confidence gating.

We model the transfer reliability of a broad policy item by a latent accuracy $\theta_m \in [0,1]$, the probability that the item remains correct when surfaces on a future case. With a uniform prior,
\[
\theta_m \sim \mathrm{Beta}(1,1),
\]
the posterior after observing support and contradiction counts is
\[
\theta_m \mid \text{data} \sim \mathrm{Beta}(1+s_m,\,1+c_m).
\]
We define confidence as the lower $\delta$-quantile of this posterior,
\begin{equation}
\label{eq:beta-confidence}
\conf(m)=Q_{\delta}\!\left(\mathrm{Beta}(1+s_m,\,1+c_m)\right).
\end{equation}
so the confidence score in Eq.~\ref{eq:beta-confidence} reflects both empirical correctness and evidence volume. Weakly tested broad items therefore remain cautious, while repeatedly validated broad items are trusted more strongly. Proposition~\ref{prop:posterior-surfacing} gives the resulting posterior error-budget guarantee, and Appendix~\ref{app:beta-vs-hoeffding} motivates the Beta choice over variance-oblivious alternatives.

At inference time, the system retrieves small candidate sets from broad and local memory by semantic similarity, filters only the retrieved broad items using label-sensitive confidence thresholds, serializes the surviving broad items together with retrieved local rules into a structured guardrail prompt, and asks the inference model to output the final decision. If no broad item survives filtering and no local rule is retrieved, the system falls back to the base guardrail $\baseg(x_t)$.

We use separate thresholds $\thref$ and $\thallow$ for refusal-oriented and allow-oriented broad memory. 
A retrieved broad item $m$ is surfaced only if
\begin{equation}
\label{eq:gating-rule}
\conf(m) \ge \tau(\ell_m),
\qquad
\tau(\ell_m)=
\begin{cases}
\thref, & \ell_m=1,\\
\thallow, & \ell_m=0.
\end{cases}
\end{equation}
This gating rule in Eq.~\ref{eq:gating-rule} provides a practical operating knob for broad-policy reuse: higher thresholds make the system more conservative, while asymmetric thresholds allow safety- or utility-prioritized deployment without changing the base guardrail.

\paragraph{Why conservative confidence rather than empirical accuracy alone.}
In sparse-feedback regimes, empirical accuracy can overstate the reliability of weakly tested broad items: a policy validated once and a policy validated many times may both appear perfect. Using a posterior lower bound avoids surfacing such brittle broad memory too early, while still allowing repeatedly validated broad items to influence inference more strongly.

\subsection{Formal design rationale}
\label{sec:formal-support}

The preceding components are motivated by two standard observations about sparse adaptive decision making. We include them here only to clarify where \textsc{LiSA} allocates refinement and when it reuses memory; Appendix~\ref{app:theory-proofs} gives the corresponding formal statements.

\paragraph{Refine broad states with label conflict.}
Broad policy memory is useful because it lets sparse reports generalize beyond individual cases, but its main failure mode is collapsing nearby cases with different labels into the same reuse state. For a broad state $z$, let $\eta_B(z)=\Pr(Y=1\mid Z=z)$ denote the REFUSE rate within $z$. If one broad decision covers all cases in $z$, refinement can only recover errors on the minority label, whose total mass is
\begin{equation}
\label{eq:conflict-mass}
\Pr(Z=z)\min\{\eta_B(z),1-\eta_B(z)\}.
\end{equation}
This product is zero for label-pure states and largest when both labels have substantial support. This motivates using local policies selectively in mixed-label regions, rather than refining every broad abstraction.

\paragraph{Gate reuse by evidence, not empirical accuracy alone.}
Sparse feedback also makes newly induced broad memory look more reliable than it is. A broad policy item with one support and no contradiction has empirical accuracy $1.0$, but little evidence. \textsc{LiSA} therefore scores each broad item $m$ with support and contradiction counts $(s_m,c_m)$ using the lower posterior quantile
\[
\mathrm{Conf}(m)=Q_\delta(\mathrm{Beta}(1+s_m,1+c_m)).
\]
The reuse rule $\mathrm{Conf}(m)\ge \tau(\ell_m)$ keeps weakly tested broad items from influencing inference too early, while allowing repeatedly validated broad items to be surfaced. This is not an end-to-end correctness guarantee for the prompted guardrail, but an evidence-sensitive criterion for controlling broad-memory reuse under sparse and noisy feedback.

\subsection{Lifelong online--offline adaptation}
\label{subsec:lifelong-loop}

The system alternates between online deployment and offline memory refresh. Online, current memory guards new inputs; offline, accumulated reports are folded back into memory by rebuilding broad items, regenerating local items in mixed-label regions, and updating broad-memory confidence statistics. The refreshed memory is redeployed in the next round, enabling continual adaptation without repeated fine-tuning. Algorithm~\ref{alg:lifelong-adaptation} summarizes the \textsc{LiSA} online--offline adaptation procedure.

\paragraph{Why global refresh rather than append-only growth.}
New reports do not merely add rules; they can reveal that existing items are redundant, overly broad, or near a previously unseen mixed-label boundary. Append-only updates would therefore accumulate overlapping abstractions and make memory increasingly order-dependent. We instead rebuild the policy set from the cumulative report bank so that broad items can be re-merged, conflict-heavy regions re-identified, and local refinements regenerated under the full evidence. In practice, only newly reported failures incur LLM-based policy induction; existing items are re-clustered and merged deterministically over their stored statements and statistics, so the refresh cost scales with new reports rather than the size of accumulated memory.

\paragraph{Preserving runtime evidence across refresh.}
A key challenge in rebuilding memory is handling online support and contradiction counts. Discarding them wastes deployment evidence, but transferring them across semantically similar yet distinct abstractions is unreliable. We therefore carry over runtime statistics only for broad policy statements that survive the rebuild, keeping confidence estimates meaningful without propagating evidence beyond the items that originally collected it.

% Algorithm~\ref{alg:lifelong-adaptation} summarizes the full \textsc{LiSA} online--offline adaptation procedure.

\begin{algorithm}[t]
\caption{\textsc{LiSA}: Lifelong Safety Adaptation}
\label{alg:lifelong-adaptation}
\begin{algorithmic}[1]
\Require Base guardrail $\baseg$, Memory $\reportbank$, Broad policies $\broadmem$, Local policies $\localmem$
\State Initialize $\reportbank, \broadmem, \localmem \gets \emptyset$
\For{each deployment round}
    \Statex \quad\textit{\textbf{// Online phase: guard each input with current memory}}
    \For{each input $x_t$ in the round}
        \State $\broadmem_{Ret} \gets \text{Retrieve}(x_t, \broadmem),\ \ \localmem_{Ret} \gets \text{Retrieve}(x_t, \localmem)$
            \Comment{retrieve broad / local policies}
        \State $\broadmem_{Ret} \gets \{\, m \in \broadmem_{Ret} : \conf(m) \ge \tau(\ell_m)\,\}$
            \Comment{confidence gating}
        \If{$\broadmem_{Ret} \cup \localmem_{Ret} = \emptyset$}
            \State $\hat y_t \gets \baseg(x_t)$
                \Comment{if no policy applies $\to$ fall back to base decision}
        \Else
            \State $\hat y_t \gets \baseg(x_t,\, \broadmem_{Ret} \cup \localmem_{Ret})$
                \Comment{decision with retrieved policies}
        \EndIf
        \State Append correction $(x_t, \tilde y_t)$ to $\reportbank$ if any
            \Comment{collect new sparse feedback}
    \EndFor
    \Statex \quad\textit{\textbf{// Offline phase: refresh memory from accumulated reports}}
    \State $\broadmem \gets \broadmem \cup \text{InduceBroad}(\reportbank)$
        \Comment{abstract new failures into broad policy candidates}
    \State $\broadmem \gets \text{Cluster}(\broadmem)$
        \Comment{merge similar items; carry over evidence $(s_m,\, c_m)$}
    \State $\localmem \gets \text{InduceLocal}(\reportbank)$
        \Comment{regenerate boundary rules in mixed-label regions}
\EndFor
\end{algorithmic}
\end{algorithm}

\section{Experimental setting}
\label{sec:experiments}

We evaluate lifelong guardrail adaptation from sparse user-reported failures: the base guardrail remains fixed, feedback is provided only for misclassified inputs, and memory is refreshed periodically rather than through repeated fine-tuning. This setup allows us to investigate whether structured policy memory can improve a lightweight guardrail over time, and whether the conservative mechanisms from Section~\ref{sec:formal-support}---local refinement in mixed-label regions and evidence-gated broad reuse---behave as intended in practice. We organize the experiments around four questions:
\begin{description}[leftmargin=1.5em]
    \item[\textbf{RQ1 (Sparse adaptation).}] Can a fixed base guardrail improve over deployment rounds using only sparse failure reports?

    \item[\textbf{RQ2 (Component roles).}] Do local refinement and confidence-gated reuse contribute in the distinct ways suggested by the design rationale?

    \item[\textbf{RQ3 (Robustness).}] Is adaptation stable when reported labels are noisy, and is this stability specifically tied to evidence-aware confidence measurement?

    \item[\textbf{RQ4 (Cost vs.\ scaling).}] Does structured memory provide a better cost--performance trade-off than simply using a larger un-adapted guardrail?
\end{description}
Sections~\ref{sec:main-results}--\ref{sec:cost-performance} address these questions in order.

\subsection{Datasets}

We evaluate on three binary guardrailing benchmarks with complementary risk profiles: \textbf{PrivacyLens+}~\citep{privacylens}, \textbf{ConFaide+}~\citep{confaide}, and \textbf{AgentHarm}~\citep{agentharm}. The ``+'' variants of PrivacyLens and ConFaide include ambiguous contextual cases introduced by~\citet{privacyreasoning}. These privacy benchmarks are useful for evaluating local decision boundaries because labels can change under subtle contextual differences. AgentHarm broadens the evaluation beyond privacy to harmful agent behavior. We map all datasets into a shared binary decision space, \textsc{allow} and \textsc{refuse}. When multiple examples are derived from the same base scenario, we keep them in the same split to avoid overlap between adaptation and held-out evaluation.

\paragraph{Deployment simulation.}

We simulate deployment over $N$ days. On each day, the guardrail predicts decisions for a stream of cases. At the end of the day, each adaptive method receives feedback only on the cases it misclassified, paired with the reported label, and updates its memory before the next day. Held-out evaluation is performed after each update on a fixed test set that is never used for adaptation. For noisy-feedback experiments, each reported failure label is independently flipped with probability $\rho$ (i.e., noise ratio) before being passed to the adaptive method.

\subsection{Baselines}

We compare \textsc{LiSA} against four baselines. \textbf{Pure Prediction} uses the base guardrail without adaptation. \textbf{AGrail}~\citep{agrail} updates a test-time checklist from observed failures and a previous history. \textbf{Synapse}~\citep{synapse} stores corrected cases and retrieves similar past examples at inference time. \textbf{ReasoningBank}~\citep{reasoningbank} converts failures into reusable natural-language memories and retrieves them as policy-like guidance. In this setting, ReasoningBank serves as a broad-policy-only memory baseline: it tests whether policy abstraction alone is sufficient, without \textsc{LiSA}'s conflict-aware local refinement or confidence-gated surfacing. All adaptive methods receive only their own day-end failure reports and never observe dense labels for the full stream. Synapse and ReasoningBank were originally designed for reasoning tasks; Appendix~\ref{app:baseline-prompting} describes how we adapt them to this guardrail setting.

\paragraph{Implementation details}

We instantiate the online guardrail with two lightweight models, 
\texttt{Gemini} \texttt{-3.1-flash-lite} and \texttt{Claude-Haiku-4.5}, to test whether the adaptation pattern depends on a particular model family. For both ReasoningBank and \textsc{LiSA}, the offline manager for policy induction  is \texttt{Gemini-3.1-pro}, and semantic retrieval uses \texttt{Gemini-embedding-001}~\citep{geminiembedding}. These offline components are kept fixed across online guardrail configurations. Adaptive methods use fixed prompt templates across datasets. We report accuracy and macro-F1 on the same held-out split across methods, averaged over five seeds. Additional split sizes, retrieval limits, thresholds, prompts, and noise construction details are provided in Appendix~\ref{app:details}.

\section{Empirical results}

\subsection{Main results: lifelong adaptation from sparse failure reports}
\label{sec:main-results}

\begin{figure*}[t]
    \centering
    \includegraphics[trim=0 0 0 10, clip,width=\linewidth]{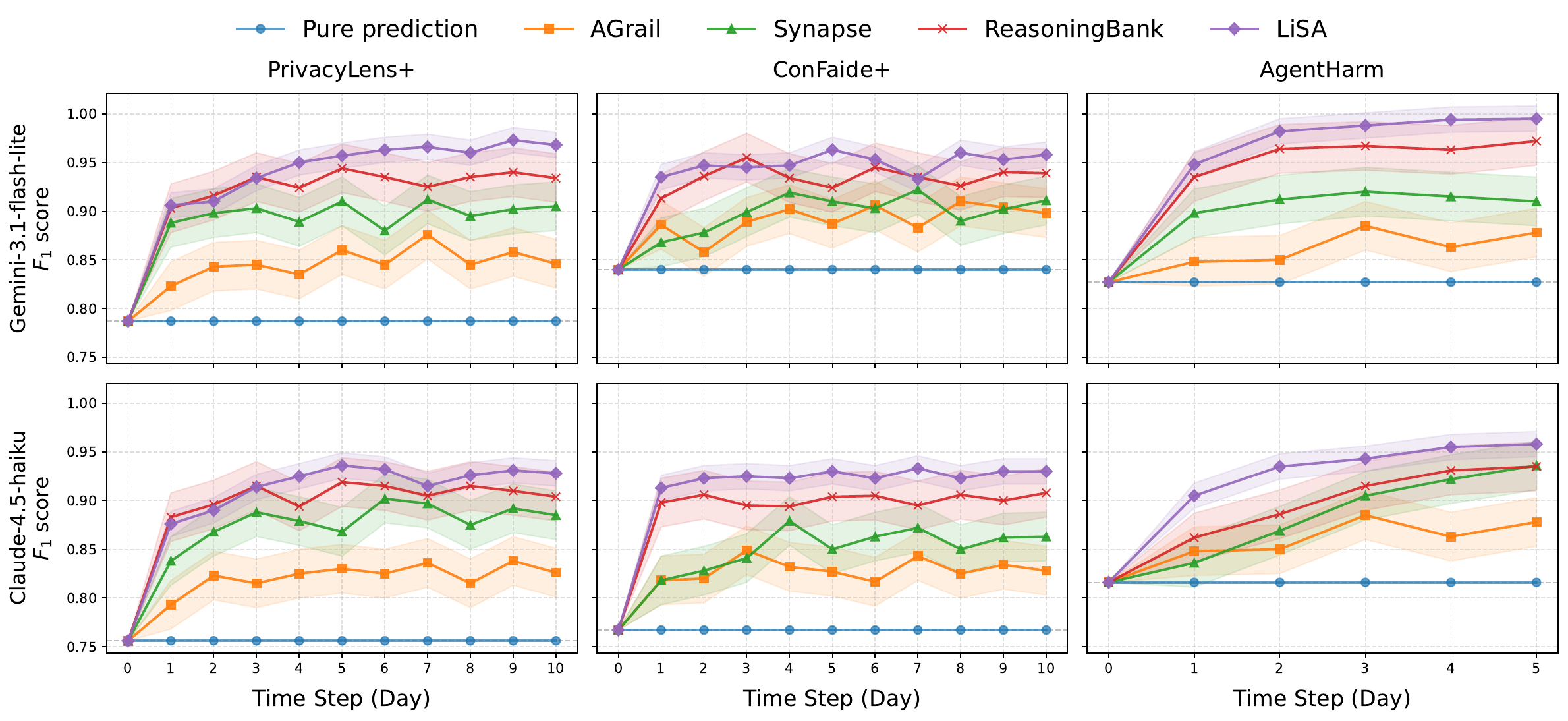}
    \caption{\textbf{Lifelong safety adaptation results}. Held-out F1 versus deployment day with sparse failure reports. Each curve is averaged over five seeds, and shaded regions indicate standard deviation. Upper row: \texttt{Gemini-3.1-flash-lite} as the online guardrail; lower row: \texttt{Claude-Haiku-4.5}.}
    \label{fig:main-curves}
    \vspace{-4mm}
\end{figure*}

Figure~\ref{fig:main-curves} shows held-out macro-F1 over deployment days. The fixed \textbf{Pure Prediction} baseline serves as the un-adapted reference performance. Memory-based methods improve over this baseline, but the type of memory matters. \textbf{AGrail} and \textbf{Synapse} yield limited gains, suggesting that checklist-style summaries or isolated case reuse do not provide enough transferable structure under sparse feedback. \textbf{ReasoningBank} performs better, indicating that broad policy abstraction is a useful unit of lifelong adaptation.

\textbf{\textsc{LiSA}} achieves the strongest performance across all three benchmarks and under both online guardrail models. The gain over ReasoningBank is moderate but consistent, which is the expected pattern if broad abstraction already captures much of the reusable signal, while \textsc{LiSA}'s additional mechanisms mainly address the failure modes of broad reuse. This matches the design rationale in Section~\ref{sec:formal-support}: conflict-aware local rules help in regions where semantically similar cases have different labels, and confidence-gated surfacing limits the influence of broad policies that have not yet accumulated enough support. The next subsection isolates each component empirically.

\subsection{Component roles: local refinement improves boundaries, gating stabilizes reuse}
\label{sec:component-roles}

\begin{wraptable}{r}{0.44\linewidth}
\vspace{-12pt}
\centering
\small
\caption{\textbf{Ablation study of \textsc{LiSA}.} Final-day macro-F1 and standard deviation across five seeds, averaged across benchmarks with \texttt{Gemini-3.1-flash-lite}.}
\label{tab:component-ablation}
\vspace{1mm}
\setlength{\tabcolsep}{4pt}
\renewcommand{\arraystretch}{1.12}
\begin{tabular}{lc}
\toprule
\textbf{Ablation} & \textbf{F1} \\
\midrule
\textbf{\textsc{LiSA}}   & 0.962 ($\pm$ 0.013) \\
w/o Conf gate   & 0.959 ($\pm$ 0.025) \\
w/o Local rules & 0.931 ($\pm$ 0.021) \\
w/o both        & 0.925 ($\pm$ 0.038)  \\
\bottomrule
\end{tabular}
\vspace{-10pt}
\end{wraptable}

Table~\ref{tab:component-ablation} separates the two additions \textsc{LiSA} makes on top of broad policy memory, in the noise-free regime ($\rho=0\%$). The largest mean drop comes from removing local rules. This is consistent with the conflict-mass theoretical prediction in Eq.~\ref{eq:conflict-mass}: when nearby cases split labels, a single broad policy must cover both sides of a local boundary, and narrower rules provide the additional resolution needed in these regions. Removing confidence gating, by contrast, has only a small effect on the mean but noticeably increases seed variance; the same pattern becomes stronger when both mechanisms are removed. 

In conclusion, local refinement drives the boundary-level F1 gain, and confidence gating stabilizes this improvement. The latter effect becomes more pronounced once reported labels are noisy, as we show in Section~\ref{sec:noise-robustness}.

\subsection{Robustness under noisy user reports and its source}
\label{sec:noise-robustness}

Table~\ref{tab:noise-robustness} evaluates adaptation when reported failure labels are corrupted before memory refresh. The results reveal a trade-off between transfer and noise sensitivity. Broad abstraction transfers well under clean feedback: \textbf{ReasoningBank} is the strongest baseline at $\rho=0\%$. Under noise, however, a mislabeled report can become reusable guidance and affect many later decisions, causing performance to drop sharply. Direct case retrieval, as in \textbf{Synapse}, localizes such errors and degrades more gradually, but transfers less effectively and has lower clean-feedback performance.

\begin{table}[t]
\centering
\begin{minipage}[t]{0.5\linewidth}
\centering
\small
\caption{\textbf{Noise robustness.} Final-day F1 score averaged across benchmarks (five seeds) with \texttt{Gemini-3.1-flash-lite}. \textit{$\rho$: label-flip ratio.}}
\label{tab:noise-robustness}
\vspace{3mm}
\setlength{\tabcolsep}{5pt}
\renewcommand{\arraystretch}{1.15}
\begin{tabular}{lccc}
\toprule
\textbf{Method} & $\rho{=}0\%$ & $\rho{=}10\%$ & $\rho{=}20\%$ \\
\midrule
Pure Prediction & 0.818 & 0.818 & 0.818 \\
AGrail          & 0.857 & 0.846 & 0.832 \\
Synapse         & 0.891 & 0.878 & 0.863 \\
ReasoningBank   & 0.936 & 0.882 & 0.857 \\
\midrule
\textbf{\textsc{LiSA}}   & 0.962 & 0.933 & 0.917 \\
\bottomrule
\end{tabular}
\end{minipage}
\hfill
\begin{minipage}[t]{0.46\linewidth}
\centering
\small
\caption{\textbf{Confidence measurement ablation.} Final-day macro-F1 at $\rho{=}20\%$, same setting as left. All variants share \textsc{LiSA}'s two-level memory and differ only in how retrieved broad items are gated.}
\label{tab:beta-ablation}
\vspace{3mm}
\setlength{\tabcolsep}{5pt}
\renewcommand{\arraystretch}{1.15}
\begin{tabular}{lc}
\toprule
\textbf{Confidence} & $\rho{=}20\%$ \\
\midrule
No gating              & 0.854 ($\pm$ 0.044) \\
Accuracy $s/(s{+}c)$   & 0.883 ($\pm$ 0.032) \\
\midrule
\textbf{Beta Quantile} & 0.917 ($\pm$ 0.025) \\
\bottomrule
\end{tabular}
\end{minipage}
\vspace{-2mm}
\end{table}

\textbf{\textsc{LiSA}} retains most of its clean-feedback gain under both noise levels. This indicates that it preserves the transfer benefits of broad abstraction without letting weak policies dominate inference too early. \textsc{LiSA} does not identify noisy reports directly; rather, confidence-gated reuse requires broad policies to accumulate evidence before surfacing, while contradictions lower their posterior confidence. This provides an evidence-sensitive reuse mechanism that is more stable than unconditional retrieval, consistent with the posterior surfacing rule in Section~\ref{subsec:gating}.

\paragraph{Confidence measurement as the stabilizing factor.}
\label{sec:beta-ablation}
Table~\ref{tab:beta-ablation} isolates the confidence mechanism at $\rho=20\%$. All variants use the same broad and local memory; they differ only in how retrieved broad items are surfaced. Without gating, performance drops substantially, showing that local refinement alone cannot prevent noisy broad policies from being reused without sufficient evidence. Empirical-accuracy gating helps, but it ignores evidence volume: a policy with one support and no contradiction and a policy with many supports and no contradictions both receive accuracy $1.0$. The Beta lower quantile separates these cases, blocking weakly tested broad policies early while allowing repeatedly validated policies to be reused. This makes it the most stable reuse rule among the tested variants.

\subsection{Latency--performance trade-off}
\label{sec:cost-performance}

\begin{wrapfigure}{r}{0.37\linewidth}
\vspace{-14pt}
\centering
\includegraphics[width=\linewidth]{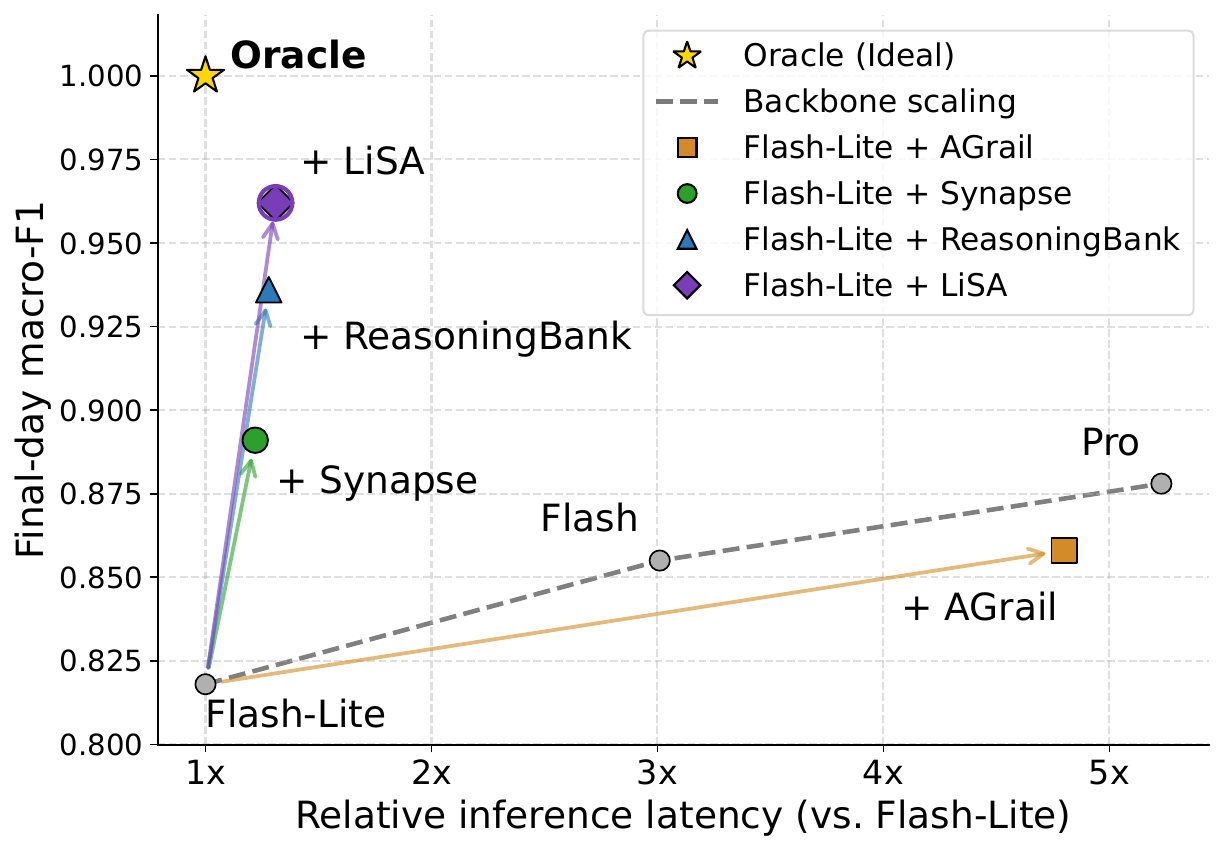}
\vspace{-12pt}
\caption{\textbf{Latency--F1 trade-off.} Memory-based adaptation pushes the frontier beyond base-model scaling, with \textsc{LiSA} closest to the oracle.}
\label{fig:cost-pareto}
\vspace{-11pt}
\end{wrapfigure}

Finally, we compare scaling a static guardrail with adapting a smaller one. Figure~\ref{fig:cost-pareto} plots final-day macro-F1 against average inference latency relative to \texttt{Gemini-3.1-flash-lite}. Larger un-adapted backbones trace the usual scaling frontier: higher F1 at higher per-query latency. Adaptive methods behave differently. \textbf{AGrail} remains below this frontier, as checklist reasoning adds latency without commensurate gains under sparse feedback. In contrast, memory-based reuse shifts the frontier upward: \textbf{Synapse} benefits from direct case reuse, \textbf{ReasoningBank} from policy abstraction, and \textbf{\textsc{LiSA}} moves closest to the oracle by adding conflict-aware local refinement and confidence-gated reuse.

Memory-based adaptation has a different cost structure from backbone scaling. 
Offline refresh is triggered only by sparse reported failures, runs outside the per-query serving path, and amortizes over many later decisions once the induced policies are reused. Detailed offline token costs are reported in Appendix~\ref{app:offline-cost}. Thus, in sparse-feedback guardrail deployment, structured memory offers a more efficient path to stronger decisions than scaling a backbone guardrail alone.

\section{Conclusion}

We studied lifelong safety adaptation under sparse, noisy user-reported failures. Our results support a simple claim: effective adaptation does not require fine-tuning or larger models, but does require calibrated reuse of deployment experiences. \textsc{LiSA} realizes this thesis through structured policy memory: abstractions generalize sparse failures, local rules preserve boundary cues in mixed-label regions, and confidence gating prevents over-generalization of weakly supported memory. Across benchmarks, this conservative memory-driven design improves guardrail performance, remains robust to noisy feedback, and pushes the latency--performance frontier beyond base-model scaling.

More broadly, \textsc{LiSA} suggests a direction for future guardrails: they should adapt to deployment experience, but only through evidence-calibrated reuse. Static guardrails cannot anticipate the long tail of local norms and evolving user expectations; yet unconstrained adaptation can introduce over-refusal or unsafe permission. Conservative policy induction offers a practical middle ground, allowing guardrails to learn from their operating environments while preserving evidence-grounded caution.

% Acknowledgements should only appear in the accepted version.
% \section*{Acknowledgements}

\bibliography{main}

\newpage

\appendix

\section{Limitations and practical implications}
\label{sec:limitations}

We close by clarifying the scope of our empirical evidence and the practical implications for deploying \textsc{LiSA} beyond the controlled benchmark setting.

\paragraph{Benchmark simulation as a controlled deployment proxy.}
Our evaluation uses benchmark-based deployment simulations rather than logs from a live deployed guardrail. This design enables controlled, reproducible comparisons while preserving key deployment constraints: sparse feedback, corrections only for each method's own mistakes, periodic memory refresh rather than repeated fine-tuning, and held-out evaluation never used for adaptation. Still, simulations cannot capture all properties of live deployments, where reports may be delayed, correlated, systematically noisy, or shaped by evolving organizational norms and user expectations. Our benchmarks are also English-language and focused on privacy- and safety-sensitive decisions, leaving multilingual~\citep{multilingual}, culturally heterogeneous~\citep{culturalsafety}, and domain-specific settings to future work. 
We therefore view the experiments as controlled evidence for \textsc{LiSA}'s adaptation mechanism under deployment-like constraints, rather than as a claim that \textsc{LiSA} is the universally optimal strategy. We encourage practitioners to tailor \textsc{LiSA} to their own deployment settings by adapting our components to local operational constraints.

\paragraph{Threshold calibration as practical guidance.}
The default threshold $\thref=\thallow=0.55$ should be understood in this spirit. Although the value is only slightly above chance, \textsc{LiSA} applies it to the lower $5\%$ posterior quantile of $\mathrm{Beta}(1+s_m,1+c_m)$ for broad policy items, so it imposes a nontrivial evidence requirement. A broad policy with no contradictions is not surfaced after one, two, three, or four supports, and first passes the threshold after about five contradiction-free supports. If contradictions are observed, more evidence is required; for example, roughly $(s_m,c_m)=(7,1)$, $(9,2)$, $(11,3)$, or $(15,5)$ is needed to pass. Thus, the symmetric threshold $\thref=\thallow=0.55$ blocks one-off broad memories and weakly supported broad policies while still allowing adaptation once modest evidence accumulates. We recommend it only as a conservative starting point, not as a deployment-independent default: the threshold should be calibrated to the target application, the expected feedback quality, and the relative cost of false accepts versus false refusals.

\paragraph{Asymmetric thresholds as a deployment interface.}
While \textsc{LiSA} supports asymmetric thresholds for refusal-oriented and allow-oriented broad memory, and Proposition~\ref{prop:label-budget} formalizes the resulting separate posterior error budgets, we do not claim to demonstrate calibrated control over the false-accept/false-refuse trade-off in this work. In our benchmark simulations, surviving broad policy items often became high-confidence after enough evidence accumulated, making asymmetric threshold sweeps less diagnostic over the 5--10 day horizon. We therefore retain asymmetric thresholds as a practical deployment interface rather than a quantitatively validated control: real-world environments are likely to exhibit more heterogeneous confidence profiles, noisier feedback, and application-specific costs for over-acceptance and over-refusal, where allocating separate posterior error budgets to refusal-oriented and allow-oriented broad memory may be useful.

\paragraph{Scope of formal results.}
Our formal results should also be read as component-level design support rather than end-to-end guarantees. Proposition~\ref{prop:conflict-mass} motivates allocating local refinement to mixed-label regions, while Proposition~\ref{prop:posterior-surfacing} justifies confidence-gated broad reuse as an evidence-sensitive surfacing rule. Neither result models the full prompted guardrail, retrieval dynamics, or the closed online--offline feedback loop. The empirical gains in Sections~\ref{sec:main-results}--\ref{sec:cost-performance} therefore reflect the joint behavior of \textsc{LiSA}'s memory mechanism and the underlying inference model, while the propositions clarify why the individual memory operations are reasonable.

\section{Formal results and proofs}
\label{app:theory-proofs}

\subsection{Notation}
\label{app:setup}

Let $\phi_B:\mathcal{X}\to\mathcal{Z}$ denote the coarse reuse state induced by broad memory and $\phi_{BL}:\mathcal{X}\to\mathcal{U}$ the refined reuse state induced after adding conflict-aware local splits, so there exists a measurable map $T$ with $\phi_B=T\circ\phi_{BL}$. Write
\[
Z=\phi_B(X), \qquad U=\phi_{BL}(X),
\]
\[
\eta_B(z)=\Pr(Y=1\mid Z=z), \qquad \eta_{BL}(u)=\Pr(Y=1\mid U=u).
\]
Let $\gfun(p)=\min\{p,1-p\}$, so that
\[
R^\star_{\phi_B}=\mathbb{E}[\gfun(\eta_B(Z))], \qquad
R^\star_{\phi_{BL}}=\mathbb{E}[\gfun(\eta_{BL}(U))].
\]
For a broad state $z$, $\Delta(z)$ denotes the reduction in Bayes $0$--$1$ risk attainable by any measurable refinement supported on $\{Z=z\}$.

\subsection{Conflict mass bound on refinement gain}
\label{app:conflict-mass-proof}

\begin{proposition}[Conflict mass bounds attainable refinement gain]
\label{prop:conflict-mass}
For every broad state $z$,
\[
\Delta(z) \;\le\; \Pr(Z=z)\cdot\min\{\eta_B(z),\,1-\eta_B(z)\},
\]
with $\Delta(z)=0$ on pure states. Under a unit-cost state-level refinement model with independent budgets, ranking broad states by the conflict mass
\[
\Pr(Z=z)\min\{\eta_B(z),1-\eta_B(z)\}
\]
maximizes the attainable upper bound on total risk reduction.
\end{proposition}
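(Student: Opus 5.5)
I will prove the two parts separately: a pointwise bound on the Bayes risk reduction inside a single broad state, followed by a greedy-selection argument for the ranking claim.

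\textbf{Step 1: localize the risk reduction.} Fix a broad state $z$ with $\Pr(Z=z)>0$. The notion of ``any measurable refinement supported on $\{Z=z\}$'' will be formalized as a finer statistic $V$ that agrees with $Z$ off $\{Z=z\}$. By the tower property, the Bayes risks split into a sum over states. Only the $z$-term changes under refinement, so
\[
\Delta(z)=\Pr(Z=z)\Bigl(\gfun(\eta_B(z)) - \mathbb{E}\bigl[\gfun(\Pr(Y=1\mid V))\,\big|\,Z=z\bigr]\Bigr).
\]

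\textbf{Step 2: bound the gain.} Since $\gfun\ge 0$, the subtracted conditional expectation is nonnegative. Hence $\Delta(z)\le \Pr(Z=z)\,\gfun(\eta_B(z))=\Pr(Z=z)\min\{\eta_B(z),1-\eta_B(z)\}$, which is the first claim.

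For completeness I will also record $\Delta(z)\ge 0$. This follows from concavity of $\gfun$ and conditional Jensen, because $\mathbb{E}[\Pr(Y=1\mid V)\mid Z=z]=\eta_B(z)$. The same inequality gives the $\phi_{BL}$ case via $\phi_B=T\circ\phi_{BL}$.

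\textbf{Step 3: pure states.} On a pure state, $\eta_B(z)\in\{0,1\}$, so the upper bound is $0$. Combined with $\Delta(z)\ge0$, this gives $\Delta(z)=0$.

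\textbf{Step 4: ranking by conflict mass.} I will model refinement as choosing a set $S$ of broad states with $|S|\le k$, each refinement costing one unit. Independence of budgets means the gains add, so $\Delta(S)=\sum_{z\in S}\Delta(z)$. Consequently the attainable upper bound is $\sum_{z\in S} w(z)$, where $w(z)=\Pr(Z=z)\gfun(\eta_B(z))$.

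Maximizing a sum of nonnegative weights under a cardinality constraint is solved by taking the $k$ largest weights. I will show this with an exchange argument: if some $z\notin S$ had a larger weight than some $z'\in S$, swapping them would not decrease the objective. This also shows the bound is attainable, since refining to the full Bayes classifier on $z$ achieves $\Delta(z)=w(z)$.

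\textbf{Main obstacle.} The hard part is making ``refinement supported on $\{Z=z\}$'' and ``independent budgets'' precise enough that additivity in Step 1 and Step 4 holds rigorously. I will handle this by defining refinements via disjoint-support sub-$\sigma$-algebras, one for each state. Once that is set up, the remaining steps are short.
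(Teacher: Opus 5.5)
Your proof is correct and takes the paper's route. The upper bound comes from nonnegativity of $\gfun$ applied to the refined conditional risk on $\{Z=z\}$, and the ranking claim reduces to choosing the $B$ largest nonnegative summands. Your additions tighten the argument without changing it. The Jensen step giving $\Delta(z)\ge 0$ is exactly what the paper's ``in particular, $\Delta(z)=0$ on pure states'' silently relies on. Your tower-property decomposition over states derives the additivity across states that the paper simply assumes as part of its refinement model.

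One sentence in Step 4 is wrong and should be dropped: the claim that the bound is attained because refining to the Bayes classifier on $z$ gives $\Delta(z)=w(z)$. Refined states are functions of $X$ (recall $U=\phi_{BL}(X)$). So the smallest attainable conditional risk on $\{Z=z\}$ is $\mathbb{E}[\gfun(\Pr(Y=1\mid X))\mid Z=z]$, which vanishes only if labels are deterministic given $X$ on that state. For example, take $Z$ constant, $X$ uniform on two points, and $\Pr(Y=1\mid X)\in\{0.9,0.1\}$. Then $w(z)=0.5$, but the best refinement achieves only $\Delta(z)=0.4$. With ambiguous or noisy labels, which is precisely this paper's regime, $\Delta(z)<w(z)$ is the typical case. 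The proposition only asserts that conflict-mass ranking maximizes the upper bound, so nothing else in your argument depends on this remark. Either delete it or state it under the explicit assumption $\Pr(Y=1\mid X)\in\{0,1\}$ almost surely on $\{Z=z\}$.
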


\begin{proof}
Fix a broad state $z$. For any refinement of $\{Z=z\}$ into sub-states,
\[
\mathbb{E}[\gfun(\eta_{BL}(U))\mid Z=z]\;\ge\; 0,
\]
so
\[
\Delta(z)
\;\le\; \gfun(\eta_B(z))\cdot\Pr(Z=z)
\;=\; \Pr(Z=z)\cdot\min\{\eta_B(z),\,1-\eta_B(z)\},
\]
which proves the first claim. In particular, $\Delta(z)=0$ when $\eta_B(z)\in\{0,1\}$.

For the ranking statement, consider a refinement model in which (i) each broad state can be refined independently at unit cost, and (ii) refinement of one state does not affect attainable gain on any other state. Under these assumptions, the attainable total risk reduction over a budget $B$ is bounded above by
\[
\sum_{z\in S}\Pr(Z=z)\min\{\eta_B(z),1-\eta_B(z)\},
\]
where $S$ is the set of refined states. Selecting the top $B$ states by conflict mass maximizes this upper bound, since the problem reduces to selecting the top $B$ nonnegative summands.
\end{proof}

The optimality statement is with respect to the attainable upper bound under the stated model. It should be interpreted as a ranking criterion rather than an absolute guarantee: cost structures or feasibility constraints that couple refinements across states may alter the optimal allocation.

\subsection{Corollary: standard refinement inequality}
\label{app:refinement-corollary}

Proposition~\ref{prop:conflict-mass} recovers the classical refinement inequality as a corollary. Since $\phi_B=T\circ\phi_{BL}$, we have $\eta_B(Z)=\mathbb{E}[\eta_{BL}(U)\mid Z]$. Concavity of $\gfun$ and Jensen's inequality give
\[
\gfun(\eta_B(Z))\;\ge\;\mathbb{E}[\gfun(\eta_{BL}(U))\mid Z],
\]
and taking expectations yields
\[
R^\star_{\phi_{BL}}\le R^\star_{\phi_B}.
\]
The inequality is strict on any positive-measure broad state that the refinement splits across the Bayes boundary $1/2$; these are precisely the states with strictly positive conflict mass. Proposition~\ref{prop:conflict-mass} therefore localizes this classical fact by attributing attainable gain to conflict mass at the state level.

\subsection{Posterior surfacing guarantee}
\label{app:posterior-surfacing-proof}

\begin{proposition}[Posterior surfacing guarantee]
\label{prop:posterior-surfacing}
Let $q$ be a broad policy item and
\[
\conf(q)=Q_\delta(\mathrm{Beta}(1+s_q,1+c_q)).
\]
Under the gating rule $\conf(q)\ge\tau$, every surfaced broad item satisfies
\[
\Pr(\theta_q<\tau\mid s_q,c_q) \;\le\; \delta,
\qquad
\mathbb{E}[1-\theta_q \mid s_q,c_q,\, q\text{ surfaced}] \;\le\; (1-\tau)+\delta.
\]
\end{proposition}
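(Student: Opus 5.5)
The plan is to argue directly from the definition of the posterior quantile; no earlier proposition is needed. Write $F$ for the CDF of the posterior $\mathrm{Beta}(1+s_q,1+c_q)$. This distribution is continuous with positive density on $(0,1)$, so $F$ is continuous and strictly increasing there. The lower $\delta$-quantile $Q_\delta$ is therefore the unique point with $F(Q_\delta)=\delta$.

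\emph{First claim.} Suppose $q$ is surfaced, so $Q_\delta\ge\tau$. Monotonicity of $F$ then gives
\[
\Pr(\theta_q<\tau\mid s_q,c_q)=F(\tau)\le F(Q_\delta)=\delta .
\]
The edge case $\tau\le 0$ is trivial. If the paper's quantile convention is the generalized inverse $\inf\{t:F(t)\ge\delta\}$, the same inequality follows because $F$ is continuous.

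\emph{Second claim.} The key observation is that the surfacing event is a deterministic function of $(s_q,c_q)$. Conditioning additionally on ``$q$ surfaced'' therefore does not change the posterior, so it suffices to bound $\mathbb{E}[1-\theta_q\mid s_q,c_q]$ whenever $Q_\delta\ge\tau$. Split on the event $\{\theta_q\ge\tau\}$:
\[
\mathbb{E}[1-\theta_q] = \mathbb{E}\bigl[(1-\theta_q)\mathbf{1}\{\theta_q\ge\tau\}\bigr]+\mathbb{E}\bigl[(1-\theta_q)\mathbf{1}\{\theta_q<\tau\}\bigr].
\]
\begin{itemize}
\item On $\{\theta_q\ge\tau\}$ the integrand satisfies $1-\theta_q\le 1-\tau$, so the first term is at most $(1-\tau)$.
\item On $\{\theta_q<\tau\}$ the integrand satisfies $1-\theta_q\le 1$, so the second term is at most $\Pr(\theta_q<\tau)\le\delta$ by the first claim.
\end{itemize}
Adding the two bounds gives $(1-\tau)+\delta$.

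\emph{Main obstacle.} The inequalities themselves are elementary. The delicate step is justifying that conditioning on surfacing is harmless. This holds because the gate depends only on the observed counts, so the statement is about the posterior given the data and not about frequentist selection effects. I would state this interpretation explicitly, and also note the quantile convention, so that $F(Q_\delta)\le\delta$ holds without ambiguity.
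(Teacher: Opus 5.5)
Your proof is correct and follows the same route as the paper. For the tail bound, the paper uses $\Pr(\theta_q<\tau)\le\Pr(\theta_q<Q_\delta)\le\delta$; for the expectation bound, it notes that surfacing is $(s_q,c_q)$-measurable and then splits on $\{\theta_q\ge\tau\}$ versus $\{\theta_q<\tau\}$, exactly as you do. Your remarks on continuity of the Beta CDF and on the quantile convention simply make explicit the step the paper takes directly from the definition of the lower quantile.
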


\begin{proof}
Let $L_\delta(s,c)$ denote the lower $\delta$-quantile of $\mathrm{Beta}(1+s,1+c)$, so $\conf(q)=L_\delta(s_q,c_q)$. By the definition of the lower quantile,
\[
\Pr(\theta_q < L_\delta(s_q,c_q)\mid s_q,c_q) \;\le\; \delta.
\]
Whenever $q$ is surfaced, $L_\delta(s_q,c_q)\ge\tau$, hence
\[
\Pr(\theta_q<\tau\mid s_q,c_q)
\;\le\;\Pr(\theta_q<L_\delta(s_q,c_q)\mid s_q,c_q)
\;\le\;\delta.
\]

For the expected-error bound, surfacing is $(s_q,c_q)$-measurable, so
\[
\mathbb{E}[1-\theta_q\mid s_q,c_q,\,q\text{ surfaced}]
\;=\;\mathbb{E}[1-\theta_q\mid s_q,c_q].
\]
Splitting on $\{\theta_q\ge\tau\}$ and $\{\theta_q<\tau\}$ and using $1-\theta_q\le 1-\tau$ on the first event and $1-\theta_q\le 1$ on the second,
\[
\mathbb{E}[1-\theta_q\mid s_q,c_q]
\;\le\; (1-\tau)\cdot\Pr(\theta_q\ge\tau\mid s_q,c_q)
      + \Pr(\theta_q<\tau\mid s_q,c_q)
\;\le\; (1-\tau)+\delta. \qedhere
\]
\end{proof}

\subsection{Corollary: label-sensitive thresholds as separate error budgets}
\label{app:label-sensitive-budget}

\begin{proposition}[Error-budget interpretation]
\label{prop:label-budget}
Under the gating rule of Section~\ref{subsec:gating}, every surfaced broad refusal-oriented item ($\ell_q=1$) satisfies
\[
\Pr(\theta_q<\thref\mid s_q,c_q)\le\delta,
\qquad
\mathbb{E}[1-\theta_q\mid s_q,c_q,\,q\text{ surfaced}]\le (1-\thref)+\delta,
\]
and every surfaced broad allow-oriented item ($\ell_q=0$) satisfies the same bounds with $\thref$ replaced by $\thallow$. Setting $\thref\ne\thallow$ therefore allocates independent posterior error budgets to refusal-oriented and allow-oriented broad memory.
\end{proposition}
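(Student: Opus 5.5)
The plan is to obtain Proposition~\ref{prop:label-budget} as a direct corollary of Proposition~\ref{prop:posterior-surfacing}. We apply that result separately on the two label classes, with the threshold fixed by the item's label.

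The key observation is that the gating rule in Eq.~\ref{eq:gating-rule} uses a threshold $\tau(\ell_q)$ determined entirely by the stored label $\ell_q$. The label is part of the item's definition, not a random quantity in the posterior model. So, conditionally on $\ell_q$, the rule is exactly the single-threshold rule $\conf(q)\ge\tau$ analysed in Proposition~\ref{prop:posterior-surfacing}.

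First, fix a surfaced broad item $q$ with $\ell_q=1$. Then $\tau(\ell_q)=\thref$, and surfacing means $Q_\delta(\mathrm{Beta}(1+s_q,1+c_q))\ge\thref$. Proposition~\ref{prop:posterior-surfacing} with $\tau=\thref$ gives both bounds:
\[
\Pr(\theta_q<\thref\mid s_q,c_q)\le\delta,\qquad
\mathbb{E}[1-\theta_q\mid s_q,c_q,\,q\text{ surfaced}]\le(1-\thref)+\delta.
\]
The first bound uses monotonicity of the posterior CDF together with the quantile definition. The second bound uses the split on $\{\theta_q\ge\thref\}$ and the fact that surfacing is $(s_q,c_q)$-measurable.

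Second, the case $\ell_q=0$ is identical with $\thallow$ in place of $\thref$.

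Third, for the ``independent budgets'' claim, note two facts:
\begin{itemize}
\item Each bound on a refusal-oriented item depends only on $\thref$ and $\delta$, and each bound on an allow-oriented item depends only on $\thallow$ and $\delta$.
\item The posterior for $\theta_q$ depends only on that item's own counts $(s_q,c_q)$.
\end{itemize}
Hence changing one threshold leaves the guarantees for the other label class untouched. This is the precise sense in which setting $\thref\ne\thallow$ allocates separate posterior error budgets. If desired, one could also state the obvious generalization with label-specific quantile levels $\delta_1,\delta_0$.

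The main obstacle is interpretive rather than technical. We must make explicit that $\ell_q$ is treated as fixed and known when conditioning, so that the threshold is deterministic given the item. We must also check that the surfacing event stays $(s_q,c_q)$-measurable once $\ell_q$ is fixed, since this is needed to drop the conditioning on ``surfaced'' in the expectation bound. With that conditioning stated, the result follows by two invocations of Proposition~\ref{prop:posterior-surfacing}.
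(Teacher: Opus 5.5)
Your proposal is correct and matches the paper's proof, which consists of the single observation that Proposition~\ref{prop:posterior-surfacing} applies separately within each label class, with $\tau=\thref$ for $\ell_q=1$ and $\tau=\thallow$ for $\ell_q=0$. Your extra remarks, that $\ell_q$ is treated as fixed and that surfacing is $(s_q,c_q)$-measurable, only make explicit what the paper leaves implicit.
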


\begin{proof}
Immediate from Proposition~\ref{prop:posterior-surfacing} applied separately within each label class.
\end{proof}

The quantity $1-\thref$ upper bounds the posterior expected error of surfaced broad refusal-oriented memory, and analogously for allow-oriented broad memory. Operators prioritizing avoidance of over-refusal can raise $\thref$, while those prioritizing avoidance of over-acceptance can raise $\thallow$; the two budgets are set independently.

\subsection{Adaptive tightness: Beta vs.\ Hoeffding}
\label{app:beta-vs-hoeffding}

Proposition~\ref{prop:posterior-surfacing} holds for any valid lower credible bound, but the specific choice of the Beta-posterior quantile governs how quickly broad items separate from the threshold in deployment. Write $\hat\theta_n=s/(s+c)$ with $n=s+c$, and let
\[
L^\beta_\delta(s,c)=Q_\delta(\mathrm{Beta}(1+s,1+c)),
\qquad
L^H_\delta(s,c)=\hat\theta_n-\sqrt{\frac{\log(1/\delta)}{2n}}.
\]

A standard quantile approximation gives, for fixed $\delta\in(0,1/2)$ and large $n$,
\[
L^\beta_\delta(s,c)-\hat\theta_n
\;\approx\;
-z_{1-\delta}\sqrt{\frac{\hat\theta_n(1-\hat\theta_n)}{n}},
\qquad
L^H_\delta(s,c)-\hat\theta_n
=
-\sqrt{\frac{\log(1/\delta)}{2n}},
\]
where $z_{1-\delta}$ is the $(1-\delta)$-quantile of the standard normal distribution.

The Beta lower bound therefore scales with the sample variance $\hat\theta_n(1-\hat\theta_n)$, which shrinks as $\hat\theta_n\to 0$ or $\hat\theta_n\to 1$, whereas the Hoeffding lower bound uses the worst-case variance $1/4$ and depends only on $n$. As a consequence, broad items with empirical reliability close to $0$ or $1$, corresponding to clearly unreliable or clearly reliable broad memory, separate from the surfacing threshold faster under the Beta score, while broad items with $\hat\theta_n\approx 1/2$ remain cautious under both. In this sense, the Beta choice adapts conservatism to the empirical reliability of each broad item.

\subsection{Monotonicity and scope}
\label{app:monotonicity-and-scope}

\begin{remark}[Monotonicity]
\label{rem:beta-monotone}
$\conf(q)=L^\beta_\delta(s_q,c_q)$ is nondecreasing in $s_q$ and nonincreasing in $c_q$. Two broad items with the same empirical mean but different total evidence therefore receive different confidence scores, with more evidence producing a tighter lower bound.
\end{remark}

\begin{remark}[Scope]
\label{rem:scope}
Proposition~\ref{prop:conflict-mass} concerns the representation-level Bayes risk attainable under refinement and does not model the prompt-level behavior of the deployed inference model. Proposition~\ref{prop:posterior-surfacing} is a statement about the posterior reliability of a surfaced broad item conditional on its accumulated evidence; it does not capture selection dynamics across the full memory, since post-surfacing evidence depends in part on earlier broad-memory gating decisions. The periodic offline refresh described in Section~\ref{subsec:lifelong-loop} partially mitigates this by re-evaluating broad items as the report bank grows. We therefore interpret these results as supporting each component of the method in isolation, rather than as end-to-end guarantees about the full adaptive system.
\end{remark}

\section{Experiments and implementation details}
\label{app:details}

\subsection{Deployment simulation and splits}

All experiments use the three benchmarks reported in the main text: PrivacyLens+, ConFaide+, and AgentHarm. The ``+'' suffix for PrivacyLens+ and ConFaide+ denotes the expanded versions used in \citet{privacyreasoning}, which augment the original datasets with additional ambiguous-context variants. We map each dataset into the shared binary label space \textsc{allow}/\textsc{refuse}, implemented as \texttt{appropriate}/\texttt{inappropriate}. Splits are group-preserving: when multiple rows are derived from the same base scenario, they are assigned together so that adaptation examples and held-out evaluation examples do not share the same scenario group.

For lifelong adaptation, each method receives a stream of deployment queries and is evaluated repeatedly on a fixed held-out set. On each day, a method first predicts labels for that day's streamed cases. At day end, it receives only the cases it misclassified, paired with the reported label, and updates its memory before the next day. The fixed held-out set is never used for adaptation and is evaluated after each daily update. Unless otherwise stated, results are averaged over five seeds.

\begin{table}[h]
\centering
\small
\vspace{-3mm}
\caption{\textbf{Deployment simulation sizes.} Only misclassified streamed cases are reported to the adaptive method at day end. Held-out examples are fixed across days and are never used for adaptation.}
\label{tab:deployment-sizes}
\vspace{3mm}
\setlength{\tabcolsep}{6pt}
\renewcommand{\arraystretch}{1.12}
\begin{tabular}{lccc}
\toprule
\textbf{Benchmark} & \textbf{Daily stream} & \textbf{Days} & \textbf{Held-out evaluation} \\
\midrule
PrivacyLens+ & 50 queries/day & 10 & 500 examples \\
ConFaide+    & 50 queries/day & 10 & 500 examples \\
AgentHarm    & 60 queries/day & 5  & 116 examples \\
\bottomrule
\end{tabular}
\end{table}

AgentHarm uses a shorter deployment horizon because fewer examples are available after constructing a group-preserving held-out split. For noisy-feedback experiments on AgentHarm, we use a fixed 200-example stream and apply label flips only to reported failures, following the same corruption protocol described below.

\subsection{Model and retrieval configuration}
\label{app:config}
The online guardrail is either \texttt{Gemini-3.1-flash-lite} or \texttt{Claude-Haiku-4.5}. The offline manager for reflection and policy induction is \texttt{Gemini-3.1-pro}, and retrieval uses \texttt{Gemini-embedding-001}. All generation calls use temperature $0$ and deployed with Google Cloud Vertex AI~\footnote{\url{https://cloud.google.com/vertex-ai}}. At inference time, retrieved context is bounded: methods retrieve at most five similar cases and two policy-like memory items per memory type. For \textsc{LiSA}, the final prompt contains at most two retrieved local rules and two confidence-filtered broad policy items. All model inference, offline policy-induction, and embedding calls were served through managed Google Cloud Vertex AI APIs; we did not run local model inference or allocate local GPU workers, so hardware details such as worker type and memory were abstracted by the managed API service.

\subsection{Policy gating and noise construction}

\textsc{LiSA} applies the Beta lower-credible-bound confidence score described in Section~\ref{subsec:gating} to broad policy items only, with prior $\mathrm{Beta}(1,1)$ and $\delta=0.05$. Unless otherwise stated, both label-specific thresholds are set to $\thref=\thallow=0.55$. If no local rule is retrieved and no retrieved broad policy item passes the threshold, \textsc{LiSA} does not prompt with empty memory; it falls back to the base guardrail.

For noisy-feedback experiments, only reported failure labels are corrupted. Each reported label is independently flipped with probability $\rho$, and the corrupted label is then used by the adaptation method. The held-out labels used for evaluation are never corrupted.

\subsection{\textsc{LiSA} offline refresh}
\label{app:offline-refresh}

At the end of each deployment day, \textsc{LiSA} refreshes memory from the newly reported failures and the accumulated case history. The refresh has two memory channels. First, newly reported failures are passed to Template 2 to induce broad structured preventive items. These items are stored as general policy candidates with a recommended label and initial support/contradiction counts from the reported batch. Second, \textsc{LiSA} rebuilds conflict-aware local rules from the accumulated case memory. This step clusters semantically similar cases, keeps mixed-label neighborhoods, and creates narrow label-specific rules for the local boundary.

The broad-policy induction step uses the following deterministic grouping and merging procedure around the LLM synthesis call. All failures newly reported at the end of the current deployment day form one induction group; in our experiments this group contains only the model's misclassified streamed cases for that day. Template 2 is then called once per non-empty group with temperature $0$ and is allowed to return at most three structured preventive items. Each item is parsed into \texttt{Title}, \texttt{Description}, \texttt{Content}, \texttt{Recommended label}, and \texttt{Rule type}; invalid or missing labels are resolved to the majority corrected label in the inducing group. The item's provenance is the full set of report ids in the group, its label-skew metadata is the corrected-label histogram of the group, its initial support count is the number of inducing reports whose corrected label matches the item's recommended label, and its initial contradiction count is the remaining number of reports in that group.

Across refreshes, broad items are not appended as independent rules forever. We keep the raw induced candidates and rebuild the broad memory by embedding each canonical structured statement with \texttt{Gemini-embedding-001}, then clustering statements with agglomerative clustering using cosine distance, average linkage, no fixed number of clusters, and distance threshold $0.20$. For each cluster, the representative statement is the member whose embedding has maximum cosine similarity to the cluster centroid. Cluster metadata is obtained by summing support counts, contradiction counts, label-skew histograms, and representative report ids over members. \textsc{LiSA} does not use a second LLM call to rewrite merged clusters; the representative statement is reused verbatim. This makes the cross-day grouping criterion exactly the embedding-cluster membership of induced statements, rather than an unreported manual taxonomy.

Mixed-label regions are detected separately from the broad-policy clusters, using the accumulated case memory rather than LLM-generated policy text. Cases are embedded from their canonical scenario summaries and clustered within each domain namespace with agglomerative clustering using cosine distance, average linkage, no fixed number of clusters, distance threshold $0.20$, and minimum cluster size $2$. A cluster is marked as mixed-label if its corrected labels contain both \textsc{allow}/\texttt{appropriate} and \textsc{refuse}/\texttt{inappropriate}; we do not impose an additional balance threshold beyond at least one case of each label, and record the conflict score as $1-\max_y n_y/\sum_y n_y$. Pure clusters are discarded. 

The local-rule text is rendered deterministically from the mixed cluster. We form a region summary by linearizing the case metadata already present in the input records, and identify decisive pivots as the attributes or textual facets whose common values differ across the \textsc{allow}/\texttt{appropriate} and \textsc{refuse}/\texttt{inappropriate} members of the same cluster. Thus, local memory is generated only from semantic neighborhoods that contain both labels; the input fields only determine how the already-detected boundary is verbalized, rather than serving as hand-written benchmark-specific decision rules.

Existing broad policies are not assigned a separate LLM update prompt. If a broad policy was surfaced during inference, its evidence is updated directly: a label match increments support and a mismatch increments contradiction. During refresh, all raw broad policy candidates are re-clustered by embedding similarity, similar policies are merged by aggregating support and label skew, and runtime statistics are carried over only for surviving broad policy text. A broad policy is treated as near a conflict-heavy region when its embedding has cosine similarity at least $0.85$ to a mixed-label conflict entry. Low-confidence broad items remain stored but are not surfaced unless their confidence later exceeds the label-specific threshold.

The two channels play different roles at runtime. Broad policies provide reusable default guidance induced from sparse failures. Local rules act as narrow warning or exception cues in regions where nearby cases split labels. Both are retrieved by semantic similarity. Broad policies are confidence-filtered before serialization, while retrieved local rules are serialized directly as narrow cues.

\subsection{Baselines implementation}
\label{app:baseline-prompting}

All baselines use the same binary decision task and JSON label interface as the base guardrail. We do not reproduce every baseline prompt verbatim, since the primary experimental distinction across methods is the memory object each baseline maintains and retrieves. Where a baseline was originally designed for a different supervision regime, we describe the adaptation explicitly below. Pure Prediction uses the base guardrail prompt alone. AGrail retrieves adaptive checklist-style notes and generates a short runtime checklist before classification. Synapse retrieves similar past decision records as case-level exemplars. ReasoningBank retrieves reusable natural-language memories induced from prior decision outcomes.

Synapse and ReasoningBank were originally designed for long-horizon agent tasks rather than single-step binary guardrail decisions, so we adapt them to the feedback interface studied in this work. For Synapse, trajectory exemplars are replaced by guardrail decision records consisting of the scenario, the model prediction, and the corrected label. This gives a case-memory baseline that tests whether adaptation can be achieved by retrieving similar reported failures directly, without inducing policy-level abstractions.

For ReasoningBank, the original trajectory-level memory construction is not directly applicable in our setting. ReasoningBank induces reasoning memories from success and failure trajectories, and in long-horizon tool-use tasks can extract multiple reflections from different parts of a trajectory, including individual tool-call decisions. By contrast, our deployment-time guardrail interface provides each adaptive method only with sparse misclassified cases and their corrected \textsc{allow}/\textsc{refuse} labels; it does not expose successful trajectories, dense labels for the full stream, or tool-call-level supervision. Instantiating the full ReasoningBank pipeline would therefore require additional information unavailable to the other baselines. We consequently implement ReasoningBank as a ReasoningBank-style broad-policy memory baseline: reported failures are converted into reusable natural-language policy memories and retrieved as decision-boundary guidance inside the same binary classification prompt. We do not add \textsc{LiSA}'s conflict-aware local memory or confidence-gated broad-policy surfacing to this baseline. This implementation matches the broad-policy-only role of \textsc{LiSA} without local rules or confidence-gated surfacing, making it both an external memory baseline adapted from prior work and a controlled reference point for isolating the contribution of \textsc{LiSA}'s two additional mechanisms.

Under this adaptation, the baselines form a natural memory ablation hierarchy: Pure Prediction removes memory entirely, Synapse tests raw case reuse, ReasoningBank tests broad policy abstraction under the same sparse failure-report interface, and \textsc{LiSA} tests whether adding conflict-aware local memory and conservative confidence-gated broad-policy reuse improves over broad abstraction alone.

\subsection{\textsc{LiSA} prompt templates}
\label{app:prompts}

\textsc{LiSA} uses fixed templates. The two templates below are the only LLM prompts introduced by \textsc{LiSA} beyond the shared base guardrail decision prompt. Indexed fields such as \texttt{Case \{i\}}, \texttt{Local Rule \{i\}}, \texttt{Memory \{i\}}, and \texttt{Failure \{i\}} denote repeated blocks; each prompt contains all retrieved or reported items up to the limits described above. Confidence updates are not prompted: after a surfaced broad item receives feedback, its support or contradiction count is updated by exact label match, and the Beta lower credible bound is recomputed. Existing policies are not rewritten by a separate LLM update prompt during refresh; newly reported failures are converted into additional structured items using Template 2, while policy merging and local-rule construction are deterministic procedures over embeddings, label counts, and stored policy metadata.

The inference template is instantiated with the current scenario, retrieved similar cases, retrieved local rules, and confidence-filtered broad policy items. If no local rule is retrieved and no broad item survives filtering, this template is not used and the system falls back to the base guardrail.

% --- Template 1 ---
\begin{tcblisting}{
title=\textbf{Prompt Template 1: \textsc{LiSA} Inference with Applicability Checks},
colback=white,
colframe=gray!60!black,
fonttitle=\bfseries,
arc=2mm,
listing only,
listing options={basicstyle=\small\ttfamily,breaklines=true,columns=fullflexible}
}
Relevant similar past cases:
  Case {i}: {case_summary}
  Outcome: {case_label}

Cases with semantic similarity despite conflicting labels:
  Content {i}: {canonical_policy}

Structured preventive memory:
  Memory {i}:
    Title: {title}
    Description: {description}
    Content: {content}
    Type: {rule_type}, label={recommended_label}, confidence={confidence}

Given the following new scenario, determine whether the action is
appropriate or inappropriate from a safety or privacy perspective. 
When applicable, prioritize high-confidence structured preventive 
memories over lower-confidence or less specific retrieved context.

Before the final answer, explicitly check each retrieved memory item for
applicability. For each item, decide one of: apply, skip, override, or
uncertain. Use apply when the item clearly matches the case. Use skip
when it is not a real fit. Use override only when a narrow exception-level
cue should dominate a broader rule. Use uncertain when the fit is
ambiguous. Then make the final decision using only the items that
survived this applicability check.

Scenario:
{scenario_text}

Respond with JSON only in this order:
{"reasoning": "briefly list memory applicability decisions, then final rationale",
 "label": "appropriate" or "inappropriate"}
\end{tcblisting}

The offline policy-induction template converts day-end failures into compact structured memory. The same template is used for broad preventive items; local rules are then regenerated from mixed-label neighborhoods during the memory refresh.

% --- Template 2 ---
\begin{tcblisting}{
title=\textbf{Prompt Template 2: Offline Preventive Memory Induction},
colback=white,
colframe=gray!60!black,
fonttitle=\bfseries,
arc=2mm,
listing only,
listing options={basicstyle=\small\ttfamily,breaklines=true,columns=fullflexible}
}
Reported failures to convert into preventive memory:

  Failure {i}:
    Scenario: {scenario_text}
    Model prediction: {predicted_label}
    Correct label: {true_label}

Convert these failures into concise, generalizable preventive memory items.
Avoid quoting case-specific names or wording. Keep each item compact and
reusable. Produce at most 3 items.

Each item must follow exactly this multiline format:
Title: short risk pattern title
Description: one-line summary
Content: preventive rule, decisive boundary, and when to defer if needed
Recommended label: appropriate or inappropriate
Rule type: general_policy or local_exception

Respond with JSON: {"insights": ["item1", "item2", ...],
"policies": ["item3", ...]}
\end{tcblisting}

The \texttt{Rule type} field in Template 2 is metadata attached to LLM-induced preventive items. \textsc{LiSA}'s conflict-aware local rules are constructed separately from mixed-label neighborhoods rather than generated by this prompt. For clarity, the deterministic local-rule rendering format is shown below.

% --- Renderer 1 ---
\begin{tcblisting}{
title=\textbf{Deterministic Template for Conflict-Aware Local Rule},
colback=white,
colframe=gray!60!black,
fonttitle=\bfseries,
arc=2mm,
listing only,
listing options={basicstyle=\small\ttfamily,breaklines=true,columns=fullflexible}
}
Local rule: In the boundary-heavy region '{region}', {recommended_outcome}.
Evidence: support={support_count}, nearby contradictions={contradict_count}.
Decisive pivots: {pivot_1}; {pivot_2}.
Use this as a narrow exception-level cue when the current case matches the
same local pattern.
\end{tcblisting}

\subsection{Existing assets and licenses}
\label{app:assets}

We use publicly available datasets and commercial API models. Table~\ref{tab:assets} summarizes access paths and licensing terms. We cite the original papers in the main text and use each asset in accordance with its released license or API terms of service. For \texttt{Claude-Haiku-4.5}, we use this model through Google Vertex AI. AgentHarm restricts use to research that improves the safety and security of AI systems; our use of the benchmark for evaluating safety guardrails is consistent with this restriction.

\begin{table}[h]
\centering
\small
\caption{\textbf{Assets used in this work.} 
Datasets and models with their access paths and license/terms.}
\label{tab:assets}
\vspace{3mm}
\setlength{\tabcolsep}{5pt}
\renewcommand{\arraystretch}{1.15}
\begin{tabular}{llll}
\toprule
\textbf{Asset} & \textbf{Type} & \textbf{Access} & \textbf{License / Terms} \\
\midrule
PrivacyLens~\citep{privacylens}     & Dataset   & GitHub  & CC BY 4.0 \\
ConFaide~\citep{confaide}           & Dataset   & GitHub  & CC BY 4.0 \\
AgentHarm~\citep{agentharm}         & Dataset   & HuggingFace             & MIT (with safety-use restriction) \\
Gemini-3.1-pro~\citep{gemini3}      & Model API & Google Vertex AI        & Vertex AI ToS \\
Gemini-3-flash~\citep{gemini3}      & Model API & Google Vertex AI        & Vertex AI ToS \\
Gemini-3.1-flash-lite~\citep{gemini3} & Model API & Google Vertex AI      & Vertex AI ToS \\
Gemini-embedding-001~\citep{geminiembedding}                & Model API & Google Vertex AI        & Vertex AI ToS \\
Claude-Haiku-4.5~\citep{claude4}    & Model API & Google Vertex AI           & Anthropic Usage Policy \\
\bottomrule
\end{tabular}
\end{table}

\section{Additional discussion and case study}

\subsection{Offline adaptation cost}
\label{app:offline-cost}

The cost--performance analysis in Section~\ref{sec:cost-performance} reports runtime latency per guarded decision because this cost is paid on every deployment input. \textsc{LiSA} additionally performs offline memory refresh when user-reported failures are accumulated. This cost is not on the critical inference path and can be batched across reports.

In our implementation, the offline policy-induction step consumes on average 427 input tokens and 2300 output tokens per reported failure. If a report-derived policy is reused over $K$ subsequent guarded decisions, its amortized offline cost is
\[
427/K \quad \text{input tokens}
\qquad\text{and}\qquad
2300/K \quad \text{output tokens}
\]
per decision. For example, even at $K=100$, this corresponds to only 4.27 input tokens and 23 output tokens per decision.

Thus, offline adaptation has a different cost structure from model scaling. Scaling the base guardrail increases cost on every inference call, whereas \textsc{LiSA} pays a small offline cost only when sparse failures are reported and then reuses the induced policies across many later inputs. This is why the main cost--F1 comparison focuses on runtime serving cost.

\subsection{Impact of offline manager quality}
\label{app:offline-manager-quality}

A natural question is whether the offline adaptation cost can be reduced by
using a smaller model as the offline policy manager. This changes a different
cost axis from the one measured in Section~\ref{sec:cost-performance}. Runtime
latency is incurred on every guarded input and is determined by the online
guardrail call, retrieval, and the serialized memory prompt. By contrast, the
offline manager is invoked only when sparse user-reported failures are
accumulated and memory is refreshed. Thus, using a weaker offline manager can
reduce intermittent refresh cost, but it does not directly reduce the
per-query critical inference path once the memory has been deployed.

\begin{wraptable}{r}{0.45\textwidth}
\vspace{-5mm}
\centering
\small
\caption{Impact of offline manager quality on final-day macro-F1. Same setting as Table~\ref{tab:component-ablation}: $\rho=0\%$, averaged across benchmarks, with Gemini-3.1-flash-lite as the online guardrail.}
\label{tab:offline_ablation}
\begin{tabular}{lc}
\toprule
\textbf{Offline Manager} & \textbf{Macro-F1} \\
\midrule
Gemini-3.1-pro (default) & 0.962 \\
Gemini-3-flash & 0.941 \\
Gemini-3.1-flash-lite & 0.915 \\
\bottomrule
\end{tabular}
\vspace{-3mm}
\end{wraptable}

Table~\ref{tab:offline_ablation} evaluates this trade-off by replacing the
default offline manager with a weaker model while keeping the online guardrail
fixed. Performance drops substantially, from 0.962 to 0.915 final-day macro-F1.
This indicates that the offline manager performs substantive abstraction work:
under sparse feedback, it must induce broad policies from few reports and
identify boundary-sensitive local refinements in mixed-label regions, rather
than merely formatting reported failures into memory entries. A weaker manager
can produce brittle, underspecified, or overly broad policies, which are then
reused across many subsequent online decisions.

This makes offline manager quality especially important in the deployment
regime studied here. If dense feedback were available after every trial, one
could update memory online and correct low-quality policies quickly. In the
guardrail setting, however, feedback arrives sparsely and memory is refreshed
only periodically. A single low-quality offline abstraction can therefore
persist and affect many future inferences. Since the offline refresh cost is
outside the critical serving path and is amortized over later guarded decisions
(Appendix~\ref{app:offline-cost}), we recommend using a strong offline manager
by default while keeping the online guardrail lightweight. Weaker offline
managers remain a viable option when refresh-time compute is constrained, but
Table~\ref{tab:offline_ablation} should be read as the expected quality cost of
that choice.

\subsection{Examples of generated policies}
\label{app:policy-examples}

In this section, we provide concrete examples of policy items generated by \textsc{LiSA} during the deployment simulation. These examples illustrate how the system induces broad general policies from sparse reports, as well as how it constructs conflict-aware local rules to handle mixed-label boundaries. Unnecessary metadata (such as internal tracking counts and IDs) have been omitted for clarity.

\begin{tcolorbox}[colback=white,colframe=gray!60!black,title=\textbf{Example 1: Broad General Policy},arc=2mm]
\textbf{Title:} Sharing General or Public Information \\
\textbf{Description:} Professionals can share non-sensitive, general information regardless of their confidential roles. \\
\textbf{Type:} \texttt{general\_policy} \hfill \textbf{Recommended Label:} \texttt{appropriate} \\
\textbf{Evidence:} Support = 170, Contradictions = 2 \hfill \textbf{Confidence:} 0.964
\tcblower
\textbf{Content:} It is appropriate for individuals in confidential roles to share general, publicly available, or non-sensitive information (e.g., event schedules, industry conferences, general techniques) with external parties, provided no protected personal or proprietary data is disclosed.
\end{tcolorbox}

\begin{tcolorbox}[colback=white,colframe=gray!60!black,title=\textbf{Example 2: Broad General Policy},arc=2mm]
\textbf{Title:} Conflation of Secrets with Public/General Info \\
\textbf{Description:} Do not confuse a subject's confidential secret with their public activities or general opinions. \\
\textbf{Type:} \texttt{general\_policy} \hfill \textbf{Recommended Label:} \texttt{appropriate} \\
\textbf{Evidence:} Support = 47, Contradictions = 2 \hfill \textbf{Confidence:} 0.879
\tcblower
\textbf{Content:} When evaluating a scenario, isolate the exact piece of information being shared. If the information is public advocacy, volunteer work, event attendance, or a general opinion, it is appropriate to share. Do not label it inappropriate just because the subject also has a related, highly sensitive secret in the background context. The boundary is whether the secret itself is being disclosed.
\end{tcolorbox}

\begin{tcolorbox}[colback=white,colframe=gray!60!black,title=\textbf{Example 3: Conflict-Aware Local Rules},arc=2mm]
\textit{These two complementary rules demonstrate how \textsc{LiSA} resolves a boundary-heavy region where cases split between appropriate and inappropriate contexts.}

\vspace{2mm}
\textbf{Local Rule 3a: Appropriate Cue} \\
\textbf{Content:} In the boundary-heavy region \textit{`sharing Ben attended a public lecture...'}, treat the case as \texttt{appropriate}. \\
\textbf{Decisive Pivots:}
\begin{itemize}[leftmargin=*,nosep]
    \item \textit{Information type differs:} appropriate cases leaned toward public lectures and interfaith dialogue, while inappropriate cases leaned toward fringe groups with radical beliefs.
    \item \textit{Sharing reason differs:} Emily expressed concern vs. Sarah shared her growing fears.
\end{itemize}

\tcblower

\textbf{Local Rule 3b: Inappropriate Cue} \\
\textbf{Content:} In the boundary-heavy region \textit{`sharing Ben attended a public lecture...'}, treat the case as \texttt{inappropriate}. \\
\textbf{Decisive Pivots:} Inverse of the pivots above, capturing the opposing boundary conditions.
\end{tcolorbox}

\end{document}